\documentclass[letterpaper]{article} 
\usepackage{aaai25}  
\usepackage{times}  
\usepackage{helvet}  
\usepackage{courier}  
\usepackage[hyphens]{url}  
\usepackage{graphicx} 
\urlstyle{rm} 
\usepackage{natbib}  
\usepackage{caption} 
\frenchspacing  
\setlength{\pdfpagewidth}{8.5in} 
\setlength{\pdfpageheight}{11in} 
%
\usepackage[ruled, noend, noline]{algorithm2e}
\SetKw{Continue}{continue}
\SetKw{Break}{break}

%
%
\pdfinfo{
/TemplateVersion (2025.1)
}

\usepackage{mathtools}
\usepackage{amssymb}
\usepackage{multirow}
\usepackage{booktabs}
\usepackage[T1]{fontenc}
\usepackage{bbold}
\usepackage{amsmath}
\DeclareMathOperator*{\argmin}{argmin}
\usepackage{accents}
\newcommand{\ubar}[1]{\underaccent{\bar}{#1}}
\usepackage{subcaption}
\usepackage{amsthm}
\usepackage{enumitem}
\newtheorem{theorem}{Theorem}
\newtheorem{corollary}{Corollary}
\newtheoremstyle{exampleStyle}  
    {\topsep}    
    {\topsep}    
    {}           
    {}           
    {\itshape}   
    {.}          
    { }          
    {#1} 

\theoremstyle{exampleStyle}
\newtheorem{example}{Example}

\hyphenation{Nijs-sen}

\setcounter{secnumdepth}{1} 

%


\title{Optimal Classification Trees for Continuous Feature Data\\Using Dynamic Programming with Branch-and-Bound}
\author{
    Catalin E. Brita\textsuperscript{\rm 1, \rm 2}, 
    Jacobus G. M. van der Linden\textsuperscript{\rm 2}\thanks{Corresponding author},
    Emir Demirovi\'{c}\textsuperscript{\rm 2}
}
\affiliations{
    \textsuperscript{\rm 1} University of Amsterdam, The Netherlands, \\
    \textsuperscript{\rm 2} Delft University of Technology, The Netherlands\\
    Catalin.Brita@student.uva.nl, \{J.G.M.vanderLinden, E.Demirovic\}@tudelft.nl
}

\begin{document}

\maketitle

\begin{abstract}
Computing an optimal classification tree that provably maximizes training performance within a given size limit, is NP-hard, and in practice, most state-of-the-art methods do not scale beyond computing optimal trees of depth three. 
Therefore, most methods rely on a coarse binarization of continuous features to maintain scalability.
We propose a novel algorithm that optimizes trees directly on the continuous feature data using dynamic programming with branch-and-bound. We develop new pruning techniques that eliminate many sub-optimal splits in the search when similar to previously computed splits and we provide an efficient subroutine for computing optimal depth-two trees. 
Our experiments demonstrate that these techniques improve runtime by one or more orders of magnitude over state-of-the-art optimal methods and improve test accuracy by 5\% over greedy heuristics.
\end{abstract}


\section{Introduction}
\label{sec:introduction}
Decision trees combine human comprehensibility and accuracy and can capture complex non-linear relationships.
As such, decision trees are well-suited models for explainable AI \citep{rudin2019stop_black_box, arrieta2020xai}.
Despite their straightforwardness, constructing an optimal decision tree (ODT, a tree with the smallest training error) of limited size is NP-hard \citep{laurent1976constructing}.
Therefore, greedy heuristics, such as CART \citep{breiman1984cart}
and C4.5 \citep{quinlan1993c4.5}, have been widely used.
These methods attain scalability by locally optimizing an information gain metric at each decision node, but yield on average less accurate and larger trees than optimal \citep{linden2024opt_vs_greedy}.

To compute ODTs, some employ general-purpose solvers such as mixed-integer programming (MIP) \citep{bertsimas2017optimal}, constraint programming \citep{verhaeghe2020cp_odt}, or Boolean satisfiability (SAT) solvers \citep{narodytska2018sat_odt}. However, these approaches struggle to scale with the number of observations and features.

Better scalability is obtained by specialized algorithms using dynamic programming (DP) and branch and bound (BnB) \citep{aglin2020learning, demirovic2022murtree}.
However, most of these algorithms cannot directly deal with numeric data which are frequently present in real-world datasets. Therefore, these algorithms either use a coarse binarization resulting in loss of optimality; or require a binary feature for every possible threshold on the numeric data, which drastically hurts scalability, because their runtime scales exponentially with the number of such features.

To the best of our knowledge, Quant-BnB \citep{mazumder2022odt_continuous_bnb} is the only specialized optimal algorithm that processes continuous features directly. This BnB algorithm considers splits at certain quantiles of the feature distribution. Obtained solutions are used as bounds for pruning, whereas other parts are further explored with splits on quantiles of subregions of the feature distribution.
Though Quant-BnB outperforms other algorithms by a large margin on numeric data, scalability is still an issue: it requires several hours to find trees of depth three for some datasets and the authors recommend against using it beyond depth three.

In this work, we present ConTree, which combines existing DP and BnB techniques with new bounding techniques to improve the scalability of computing optimal classification trees for numeric data. Our new lower-bounding techniques prune large parts of the search space while adding negligible computational overhead. 
Furthermore, for depth-two trees, we propose a specialized subroutine that exploits the fact that we can sort numeric data.

Our experiments show that these algorithmic improvements boost scalability by one or more orders of magnitude over Quant-BnB and three previous MIP and SAT approaches by an ever larger margin.
This makes ConTree the first approach to compute depth-four ODTs beyond small or binarized datasets within a reasonable time.
When trained with the same size limit, ConTree's test accuracy is on average 5\% higher than CART and 0.7\% higher than ODTs trained with a coarse binarization.


\section{Related Work}
\label{sec:literature_review}
\paragraph{Heuristics} Traditionally, decision trees are trained using top-down induction heuristics such as CART \citep{breiman1984cart} and C4.5 \citep{quinlan1993c4.5} because of their scalability.
These heuristics recursively divide the data based on local criteria such as information gain or Gini impurity. On average this yields trees that are larger than the optimal tree \citep{murthy1995dt_greedy_evaluation} or, if constrained by a fixed depth, have lower out-of-sample accuracy than optimal trees under the same size limit \citep{linden2024opt_vs_greedy}.

\paragraph{Optimal} Optimal decision trees globally optimize an objective (e.g., minimize the number of misclassifications) within a given size limit on the training data.
Computing ODTs, however, is NP-hard \citep{laurent1976constructing}, and thus scalability is challenging. To address this challenge, many different approaches have been proposed. 

\paragraph{General-purpose solvers}
The first MIP approaches were proposed by \citet{bertsimas2017optimal} and \citet{verwer2017flexible}.
Many other formulations followed, typically using binarization of the continuous feature data to improve the scalability \citep{verwer2019learning, gunluk2021categorical, aghaei2024strong, liu2024oct_leaf_bin}.
\citet{hua2022odt_mip_branching} instead use a multi-stage MIP model with novel lower bounds.
\citet{ales2024new_mip_odt} obtain stronger linear relaxations from a novel quadratic formulation.
\citet{narodytska2018sat_odt}, \citet{janota2020dt_sat_expl_paths}, \citet{avellaneda2020efficient_sat_odt}, \citet{ hu2020maxsat_odt} and \citet{alos2023maxsat_odt} propose SAT models that also require binarization of continuous feature data.
As far as we know, \citet{m_shati2021sat_dt_nonbin, shati2023sat_odt_nonbin} propose the only SAT-based algorithm that can directly process continuous and categorical features.
Finally, \citet{verhaeghe2020cp_odt} propose a constraint programming approach that also requires binary data.
However, despite improvements, these MIP, SAT, and CP methods still face problems scaling beyond a few thousand data instances and trees of depth three.

\paragraph{Specialized algorithms}
\citet{nijssen2007mining, nijssen2010dl8_constraints} introduced DL8, an early DP approach.
\citet{aglin2020learning, aglin2020pydl8} improved it to DL8.5 with branch-and-bound and extended caching techniques.
\citet{hu2019sparse} and \citet{lin2020generalized_sparse} contribute new lower bounds including a subproblem similarity bound.
\citet{demirovic2022murtree} introduce a specialized subroutine for trees of depth two and constraints for limiting the number of branching nodes. 
\citet{kiossou2022dl8.5_anytime} and \citet{demirovic2023anytime_blossom} improve the anytime performance of the search.
\citet{linden2023streed} generalize previous DP approaches beyond classification.
Because of all these algorithmic advances, a recent survey considers the DP approach currently the best in terms of scalability \citep{costa2023dt_survey}.
However, all of these methods require binarized input data.

\paragraph{Continuous features}
Quant-BnB \citep{mazumder2022odt_continuous_bnb} is the only specialized algorithm for ODTs that directly optimizes datasets with continuous features. It employs branch-and-bound by splitting on quantiles of the feature distribution. Although Quant-BnB can handle much larger datasets than the MIP and SAT approaches, it also struggles to scale beyond trees of depth three.

\paragraph{Summary} Scalability advances for ODT search mostly require binarization. When operating directly on the numeric data, scalability is still challenging.


\section{Preliminaries}   
In this section, we introduce notation, formally define the problem, 
and describe the lower-bounding technique that provides the basis for ConTree's pruning.

\paragraph{Notation}
Let $\mathcal{D}$ describe a dataset with $n = |\mathcal{D}|$ observations $(x, y)$ with $x \in \mathbb{R}^p$ and $y \in \mathcal{Y}$ describing respectively the feature vector and label of an observation. Here~$p$ is the number of features and $\mathcal{Y}$ the set of class labels. The set of all features is denoted as $\mathcal{F} = \{f_1, \ldots, f_p\}$. Each observation $x$ contains the values of these $p$~features such that $x_f$ is the value of feature $f$ in observation $x$. 
Let $\mathcal{D}^f$ denote the sorted values $x_{f}$ for $(x, y) \in \mathcal{D}$ and let $U^f$ describe all unique sorted values in $\mathcal{D}^f$ (with $\mathcal{D}$ determined by context) and similarly
\begin{equation}
S^f = \left\{ \frac{U^f_1 + U^f_2}{2}, ..., \frac{U^f_{|U^f|-1} + U^f_{|U^f|}}{2} \right\} 
\end{equation}%
the set of possible thresholds on feature $f$ for $\mathcal{D}$: the midpoints between the unique feature values. Let $m = |S^f|$ be the number of possible thresholds.
Given a threshold $\tau \in S^f$, let $z(\tau)$ represent the index of the observation in $\mathcal{D}^f$ with the largest value for $f$ smaller than $\tau$.
Similarly, let $u(\tau)$ represent the index of $\tau$ in $U^f$.
Finally, let $\mathcal{D}(f \leq \tau)$ describe the subset of observations $(x,y) \in \mathcal{D}$ where $x_f \leq \tau$.

\paragraph{Problem definition}
A decision tree is a function $t : \mathbb{R}^p \rightarrow \mathcal{Y}$ that recursively splits the feature space $\mathbb{R}^p$ into sub-regions and predicts the label of each sub-region.
Let $\mathcal{T}(\mathcal{D}, d)$ describe the set of all decision trees for the dataset $\mathcal{D}$ with a maximum depth of $d$. Then the ODT $t_{\operatorname{opt}}$ is the tree within that set that minimizes the misclassification score:
\begin{equation}
t_{\operatorname{opt}} = \argmin_{t \in \mathcal{T}(\mathcal{D}, d)} \sum_{(x,y) \in \mathcal{D}} \mathbb{1}(t(x) \neq y) \,.
\end{equation}%
This work is limited to binary axis-aligned trees: every branching node splits on precisely one feature $f \in \mathcal{F}$ based on a threshold $\tau$ such that every observation with $x_{f} \leq \tau$ goes left in the tree while the rest goes to the right.

\begin{figure*}
\centering
\begin{subfigure}{.49\textwidth}
  \centering
  \includegraphics[width=.98\linewidth]{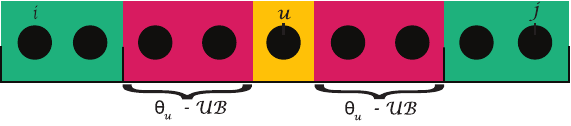}
  \caption{Neighborhood pruning}
  \label{fig:lower_bounds_neighborhood_pruning}
\end{subfigure}%
\begin{subfigure}{.49\textwidth}
  \centering
  \includegraphics[width=.98\linewidth]{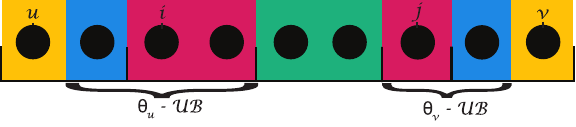}
  \caption{Interval shrinking}
  \label{fig:lower_bounds_interval_shrinking}
\end{subfigure}
\caption{The split points $u$ and $v$ for which the score $\theta_u$ and $\theta_v$ are calculated are yellow. The newly pruned values are shown in red. Green indicates the remaining split points for further search. Blue indicates unaffected values outside of $[i..j]$.}
\end{figure*}

\paragraph{Similarity lower bounding}
\citet{hu2019sparse}, \citet{lin2020generalized_sparse}, and \citet{demirovic2022murtree} propose a similarity-based lower-bounding (SLB) technique that compares a new dataset $\mathcal{D}_{new}$ with a previously analyzed dataset $\mathcal{D}_{old}$ to derive a lower bound on the misclassification score of the new dataset. SLB assumes that all new observations in the new dataset will be classified correctly and all removed observations from the old dataset were classified incorrectly, yielding the following lower bound: 
\begin{equation}
\label{eq:slb}
\theta_{\mathcal{D}_{new}} \geq \theta_{\mathcal{D}_{old}} - |\mathcal{D}_{old} \setminus \mathcal{D}_{new}| \,,
\end{equation}%
where $\theta$ is the misclassification score a decision tree with the same depth limit can achieve on the dataset. 
From this property, we derive three novel pruning techniques below.

\section{The ConTree Algorithm}
We present the ConTree algorithm (CT) which constructs an ODT by recursively performing splits on every branching node within a full tree of pre-defined depth. Subproblems are identified by the dataset $\mathcal{D}$ and the remaining depth limit~$d$.
This results in the following recursive DP formulation: 
\begin{equation}
\label{eq:dp}
    \operatorname{CT}(\mathcal{D}, d) = 
    \begin{dcases}
        \min_{\hat{y} \in \mathcal{Y}} \sum_{(x,y) \in \mathcal{D}} \mathbb{1}(\hat{y} \neq y) & \text{if~} d=0 \\
        \min_{f \in \mathcal{F}} \operatorname{Branch}(\mathcal{D}, d, f)
         & \text{if~} d > 0
    \end{dcases}
\end{equation}%
Leaf nodes assign the label with the least misclassifications. Branching nodes find the feature~$f$ with the best misclassification score from the subtrees by calling the subprocedure $\operatorname{Branch}$ which iterates over all possible split thresholds~$\tau$: 
\begin{equation}
\label{eq:branch}
    \operatorname{Branch}(\mathcal{D}, d, f) = \min_{\tau \in S^f} \operatorname{Split}(\mathcal{D}, d, f, \tau) \,.
\end{equation}%
Every split results in two subproblems:
\begin{multline}
\label{eq:split}
    \operatorname{Split}(\mathcal{D}, d, f, \tau) = \\
    \operatorname{CT}\left(\mathcal{D}(f \leq \tau), d-1\right) + 
\operatorname{CT}\left(\mathcal{D}(f > \tau), d-1\right) \,.
\end{multline}%

Given a splitting feature $f$, computing the misclassification score $\theta_{\tau}$ for all possible split points $\tau \in S^f$ is computationally expensive since each split point considered requires solving two (potentially large) subproblems. Therefore, we provide the following runtime improvements (each of which preserves optimality):
\begin{description}
    \item[Lower-bound pruning] three novel pruning techniques specifically designed for continuous features to speed up the computation without losing optimality;
    \item[Depth-two subroutine] a subroutine for depth-two trees that iterates over sorted feature data to update class occurrences and efficiently solves the two depth-one subproblems in Eq.~\eqref{eq:split} simultaneously;
    \item[Caching] the same dataset caching technique as \citet{demirovic2022murtree}: ConTree reuses cached solutions to subproblems (defined by $\mathcal{D}$ and $d$).
\end{description}%
To control the trade-off between training time and accuracy, we also provide a $\operatorname{max-gap}$ parameter to set the maximum permissible gap to the optimal solution.

\subsection{Pruning Techniques}
\label{sec:pruning}
Based on the similarity-based lower bound presented in Eq.~\eqref{eq:slb}, we present three novel pruning techniques to reduce the number of split points that need to be considered in Eq.~\eqref{eq:branch} without losing optimality. The key idea is that if the feature data is sorted, the solution of any $\operatorname{Split}$ call with threshold $\tau$ provides a lower bound for all next calls with a different threshold $\tau'$ since we can easily count how many observations shifted from the left to the right subtree by subtracting their indices in the sorted data: $|z(\tau) - z(\tau')|$.
\begin{theorem}
\label{th:slb}
Let $\mathcal{UB}$ be the best solution so far or the score needed to obtain a better solution. Let $\theta_{\tau} = \operatorname{Split}(\mathcal{D}, d, f, \tau)$ be the optimal misclassification score for the subtree when branching on $f$ with threshold $\tau$. Then any other threshold $\tau'$ with $|z(\tau) - z(\tau')| \leq \theta_{\tau} - \mathcal{UB}$ cannot yield an improving solution.
\end{theorem}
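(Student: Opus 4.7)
The plan is to apply the similarity-based lower bound of Eq.~\eqref{eq:slb} separately to the left and right subproblems induced by $\tau$ and $\tau'$, and then add the two inequalities. Because the feature values are sorted, the number of observations that shift sides between the split at $\tau$ and the split at $\tau'$ is exactly $|z(\tau)-z(\tau')|$, which is what lets the SLB collapse into the inequality claimed by the theorem.

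Without loss of generality I assume $\tau<\tau'$, so $z(\tau)\le z(\tau')$. Write $\theta_{\tau}^{L}=\operatorname{CT}(\mathcal{D}(f\le\tau),d-1)$ and $\theta_{\tau}^{R}=\operatorname{CT}(\mathcal{D}(f>\tau),d-1)$, and analogously for $\tau'$, so that $\theta_\tau=\theta_\tau^L+\theta_\tau^R$ and $\theta_{\tau'}=\theta_{\tau'}^L+\theta_{\tau'}^R$ by Eq.~\eqref{eq:split}. The key set-theoretic observations are the inclusions $\mathcal{D}(f\le\tau)\subseteq\mathcal{D}(f\le\tau')$ and $\mathcal{D}(f>\tau')\subseteq\mathcal{D}(f>\tau)$, together with the count $|\mathcal{D}(f>\tau)\setminus\mathcal{D}(f>\tau')|=z(\tau')-z(\tau)$. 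I would then apply Eq.~\eqref{eq:slb} with $\mathcal{D}_{old}=\mathcal{D}(f\le\tau)$, $\mathcal{D}_{new}=\mathcal{D}(f\le\tau')$ to obtain $\theta_{\tau'}^{L}\ge\theta_{\tau}^{L}$, and apply it with $\mathcal{D}_{old}=\mathcal{D}(f>\tau)$, $\mathcal{D}_{new}=\mathcal{D}(f>\tau')$ to obtain $\theta_{\tau'}^{R}\ge\theta_{\tau}^{R}-(z(\tau')-z(\tau))$.

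Summing the two inequalities gives $\theta_{\tau'}\ge\theta_{\tau}-|z(\tau)-z(\tau')|$; the opposite case $\tau>\tau'$ is identical after swapping the roles of the left and right subtrees. Combining this with the hypothesis $|z(\tau)-z(\tau')|\le\theta_{\tau}-\mathcal{UB}$ immediately yields $\theta_{\tau'}\ge\mathcal{UB}$, meaning the split at $\tau'$ cannot strictly improve on $\mathcal{UB}$, which is exactly the claim.

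The main obstacle is conceptual rather than technical: one has to orient SLB correctly on each side, putting as $\mathcal{D}_{old}$ the dataset that actually loses observations when moving to~$\tau'$, otherwise $|\mathcal{D}_{old}\setminus\mathcal{D}_{new}|=0$ and the bound becomes trivial on that side. Once the orientation is right, SLB is invoked at depth~$d-1$ to match the subproblem depth in Eq.~\eqref{eq:split}, and everything else is arithmetic on the sorted indices.
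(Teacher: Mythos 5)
Your proof is correct and follows essentially the same route as the paper: apply the similarity lower bound of Eq.~\eqref{eq:slb} to the left and right subproblems of the $\tau$ and $\tau'$ splits, note that the shifted-observation count is $|z(\tau)-z(\tau')|$ on one side and zero on the other, and sum to get $\theta_{\tau'} \geq \theta_{\tau} - |z(\tau)-z(\tau')| \geq \mathcal{UB}$. Your version is somewhat more explicit than the paper's about which dataset plays the role of $\mathcal{D}_{old}$ on each side, but the argument is the same.
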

\begin{proof} 
This follows directly from Eq.~\eqref{eq:slb}. If $\tau' > \tau$, then $|\mathcal{D}(f \leq \tau') \setminus \mathcal{D}(f \leq \tau)| = z(\tau') - z(\tau)$ and if $\tau' < \tau$, then $|\mathcal{D}(f > \tau') \setminus \mathcal{D}(f > \tau)| = z(\tau) - z(\tau')$. Therefore, the SLB for a split at $\tau'$ is: $
\theta_{\tau'}  \geq \theta_{\tau} - |z(\tau) - z(\tau')| \geq \mathcal{UB}$.
\end{proof}%
\begin{corollary}
\label{cor:nb}
Let $u$ be a split index with its corresponding solution value $\theta_u$ and index $z(u)$ within $\mathcal{D}^f$.
Let $\Delta$ be the difference between $\theta_u$ and $\mathcal{UB}$ and at least one: $\Delta = \max(1, \theta_u - \mathcal{UB})$. Any improving split must have a threshold smaller than the value in $\mathcal{D}^f$ at index $z(u) - \Delta$ or larger than the value at index $z(u) + \Delta$.
\end{corollary}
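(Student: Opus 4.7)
The plan is to present Corollary~\ref{cor:nb} as an index-level restatement of Theorem~\ref{th:slb}, applied to the threshold whose sorted-data index is $u$. Writing $\tau_u$ for that threshold so that $z(\tau_u) = z(u)$, Theorem~\ref{th:slb} immediately gives that no $\tau'$ with $|z(\tau_u) - z(\tau')| \leq \theta_u - \mathcal{UB}$ can improve on $\mathcal{UB}$. The work of the corollary is then (i) to justify the $\max(1, \cdot)$, and (ii) to re-express the index condition $|z(u) - z(\tau')| \leq \Delta$ as a condition on the threshold value itself.

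For part (i), I would split on whether $\theta_u - \mathcal{UB} \geq 1$. In the main case $\theta_u - \mathcal{UB} \geq 1$, I set $\Delta = \theta_u - \mathcal{UB}$ and invoke Theorem~\ref{th:slb} verbatim. In the edge case $\theta_u \leq \mathcal{UB}$, observe that $\theta_u$ itself is already a feasible score that does not exceed $\mathcal{UB}$, so it is used to tighten $\mathcal{UB}$ (updating it to at most $\theta_u$ under either convention of ``best-so-far'' or ``score needed to improve''); after this update, $\theta_u - \mathcal{UB} \geq 0$, and since misclassification scores are integer-valued, any $\tau'$ with $|z(u) - z(\tau')| \leq 1$ satisfies $\theta_{\tau'} \geq \theta_u - 1 \geq \mathcal{UB} - 1$, which suffices to rule out a strict improvement. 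Hence $\Delta = \max(1, \theta_u - \mathcal{UB})$ is a valid pruning radius in all cases.

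For part (ii), I would use monotonicity of $z$ in the threshold. If $\tau' < \mathcal{D}^f_{z(u) - \Delta}$, then the largest sorted feature value below $\tau'$ sits at an index strictly less than $z(u) - \Delta$, so $z(u) - z(\tau') > \Delta$; symmetrically, if $\tau' > \mathcal{D}^f_{z(u) + \Delta}$, then $z(\tau') - z(u) \geq \Delta$ in a manner that puts $\tau'$ outside the pruning window. Contrapositively, every $\tau'$ inside $[\mathcal{D}^f_{z(u) - \Delta},\, \mathcal{D}^f_{z(u) + \Delta}]$ satisfies the hypothesis of Theorem~\ref{th:slb} and so cannot improve on $\mathcal{UB}$, proving the corollary.

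The main obstacle I anticipate is purely bookkeeping: getting the off-by-one indexing consistent between the sorted array $\mathcal{D}^f$, the split index $u$, the threshold set $S^f$, and the definition of $z$ (largest index with value strictly below $\tau$). I would handle this by carefully tracking strict vs.\ non-strict inequalities at the two endpoints $z(u) \pm \Delta$, and by noting that ties in feature values are absorbed cleanly because thresholds live in $S^f$, i.e., at midpoints of distinct unique values in $U^f$.
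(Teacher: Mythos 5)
Your overall route---reading Corollary~\ref{cor:nb} as an index-level restatement of Theorem~\ref{th:slb} plus a translation from an index window to a value window---is exactly how the paper treats it (no separate proof is given), and your part~(ii) is sound: since thresholds are midpoints and never coincide with data values, monotonicity of $z$ shows that every $\tau'$ in $[\mathcal{D}^f_{z(u)-\Delta},\,\mathcal{D}^f_{z(u)+\Delta}]$ satisfies $|z(u)-z(\tau')|\leq\Delta$, so Theorem~\ref{th:slb} applies whenever $\Delta\leq\theta_u-\mathcal{UB}$.

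The gap is in part~(i), your justification of the floor $\Delta=\max(1,\theta_u-\mathcal{UB})$. In the edge case $\theta_u\leq\mathcal{UB}$ you derive $\theta_{\tau'}\geq\theta_u-1\geq\mathcal{UB}-1$ and assert this ``suffices to rule out a strict improvement.'' It does not: $\theta_{\tau'}=\mathcal{UB}-1$ is consistent with your bound and is a strict improvement on $\mathcal{UB}$; integrality of the scores buys you nothing here. Concretely, when $\theta_u=\mathcal{UB}$, Theorem~\ref{th:slb} only licenses pruning thresholds with $z(\tau')=z(u)$, i.e., thresholds inducing the identical partition (for which $\theta_{\tau'}=\theta_u\geq\mathcal{UB}$ trivially). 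A neighboring threshold that moves exactly one observation---which exists whenever the boundary value $U^f_u$ occurs only once in $\mathcal{D}^f$---can genuinely achieve $\theta_u-1<\mathcal{UB}$: the moved observation may flip from misclassified on the left to correctly classified on the right while neither subtree otherwise degrades. So the radius-one floor is not a consequence of Theorem~\ref{th:slb}, and your proof papers over this rather than resolving it. To close it you must either restrict the $\Delta=1$ case to excluding only thresholds with $z(\tau')=z(u)$ (which is what the theorem actually yields, and already suffices for the algorithm to make progress past the evaluated split and its duplicates), or make explicit a convention on $\mathcal{UB}$ under which a solution of score $\mathcal{UB}-1$ does not count as ``improving''; note that under either reading of $\mathcal{UB}$ offered in Theorem~\ref{th:slb} (incumbent value, or score needed to beat it) one of the required inequalities ends up non-strict in the wrong place, so this convention genuinely needs to be pinned down rather than assumed.
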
%
\noindent
Per $\operatorname{Branch}$ call, ConTree keeps track of the set of threshold indices that may yield a split that improves the current best tree. This set is represented as a set of index intervals $Q$. 
Initially, $Q$ contains one interval of all indices: $Q = \{ [1 .. m] \}$. After each $\operatorname{Split}$ call, ConTree updates the set $Q$ by using three pruning functions~$\mathcal{P}$ that return a list of pruned intervals from the current interval~$[i .. j]$, the current~$\mathcal{UB}$, and one or more subproblem solutions $\theta$.
Next, we explain these three pruning functions: neighborhood pruning, interval shrinking, and sub-interval pruning.

\subsubsection{Neighborhood pruning (NB)}
\label{subsec:neighborhood_pruning}
After the misclassification score $\theta_{u}$ for a split point $u \in [i .. j]$ is computed, neighborhood pruning uses the SLB to remove similar split points from consideration. A simplified illustration of this pruning technique can be seen in Fig.~\ref{fig:lower_bounds_neighborhood_pruning}.
Using Cor.~\ref{cor:nb}, we define two functions $\ubar{A}$ and $\bar{A}$ that return the closest thresholds from $u$ that could still improve on $\mathcal{UB}$. 
\begin{align}
    \ubar{A}(u, \Delta) &= \max \left\{u' \in [m] ~|~ U^f_{u'} < \mathcal{D}^f_{z(u) - \Delta} \right\} \\
    \bar{A}(u, \Delta) &= \min \left\{u' \in [m] ~|~ U^f_{u'+1} > \mathcal{D}^f_{z(u) + \Delta} \right\}
\end{align}%
The functions $\ubar{A}$ and $\bar{A}$ can be implemented using binary search with time complexity $\mathcal{O}(\log(m))$. Using these, we define $\mathcal{P}_{\operatorname{NB}}$ which yields two new intervals:
\begin{equation}
    \mathcal{P}_{\operatorname{NB}}([i .. j], u, \Delta) = \{ [i .. \ubar{A}(u, \Delta)], [\bar{A}(u, \Delta) .. j] \}
\end{equation}%
\begin{example}
Consider a continuous feature vector with values $[0.4$, $0.5$, $0.5$, $0.7$, $0.8$, $0.10]$ with split points $[0.45$, $0.6$, $0.75$, $0.9]$, where we have computed an optimal tree for the split on $\tau = 0.75$ with two more misclassifications than the current best solution $\mathcal{UB}$, that is, $\Delta = 2$. Therefore, any possibly improving split needs to move at least two instances from the left to the right (or vice versa), which means that only the split point $\tau = 0.45$ can yield an improving solution.
\end{example}

\subsubsection{Interval shrinking (IS)}
\label{subsec:interval_shrinking}
Interval shrinking is an extension of neighborhood pruning and acts whenever $\mathcal{UB}$ is updated.
Given an interval $[i .. j] \in Q$, IS searches for the largest threshold index $u \in \mathcal{V}$ smaller than $i$ and the smallest threshold index $v \in \mathcal{V}$ larger than $j$, with $\mathcal{V}$ the set of split indices for which the misclassification score $\theta_u$ and $\theta_v$ are already computed. Using Cor.~\ref{cor:nb}, IS then prunes the interval $[i .. j]$ as illustrated in Fig.~\ref{fig:lower_bounds_interval_shrinking}.
To search for these indices $u$ and $v$, we define the function $B$ that uses binary search over $\mathcal{V}$ with time complexity $\mathcal{O}(\log(m))$:
\begin{equation}
\label{eq:outside_indices}
    B([i .. j], \mathcal{V}) = \Big\langle \max_{ u \in \mathcal{V} : u < i } u, \min_{ v \in \mathcal{V} : v > j } v \Big\rangle \,.
\end{equation}%
Additionally, IS uses the following theorem:
\begin{theorem}
\label{th:zero}
Let $w$ by any split point with a solution $\theta_w$ with a left subtree misclassification score $\theta_{w, L}$ of zero. Then any split point $u < w$ will yield $\theta_u \geq \theta_w$. Similarly, if the right subtree misclassification score $\theta_{w, R}$ is zero, then for all $v > w$ also $\theta_v \geq \theta_w$.
\end{theorem}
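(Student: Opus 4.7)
The plan is to reduce the claim to monotonicity of the optimal misclassification score under dataset inclusion, which is an immediate consequence of the SLB in Eq.~\eqref{eq:slb}. First I would decompose $\theta_w = \theta_{w,L} + \theta_{w,R}$ and observe that the hypothesis $\theta_{w,L} = 0$ collapses this to $\theta_w = \theta_{w,R}$.

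Next, for any split point $u < w$ I would compare the right subproblems: since the thresholds are ordered, $\mathcal{D}(f > w) \subseteq \mathcal{D}(f > u)$, so taking $\mathcal{D}_{old} = \mathcal{D}(f > w)$ and $\mathcal{D}_{new} = \mathcal{D}(f > u)$ gives $|\mathcal{D}_{old} \setminus \mathcal{D}_{new}| = 0$. Applying Eq.~\eqref{eq:slb} (at depth $d-1$) yields $\theta_{u,R} \geq \theta_{w,R}$. Combining with the trivial bound $\theta_{u,L} \geq 0$ then gives
\begin{equation*}
\theta_u = \theta_{u,L} + \theta_{u,R} \geq 0 + \theta_{w,R} = \theta_w \,.
\end{equation*}
The symmetric statement follows by swapping the roles of the left and right subtrees: if $\theta_{w,R} = 0$ and $v > w$, then $\mathcal{D}(f \leq w) \subseteq \mathcal{D}(f \leq v)$, so $\theta_{v,L} \geq \theta_{w,L} = \theta_w$ by the same SLB argument, and hence $\theta_v \geq \theta_w$.

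The proof is essentially one line once the right lens is chosen; the only subtlety is justifying the ``monotonicity'' step rigorously. I would not invoke it as folklore but derive it from Eq.~\eqref{eq:slb} so the argument is fully self-contained within the paper's framework. The hardest part, if any, is simply being careful about the set difference direction in the SLB, since the SLB is usually stated with $\mathcal{D}_{new}$ as the smaller set; here it is the larger one, which is exactly why the correction term vanishes.
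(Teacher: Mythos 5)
Your proof is correct and follows essentially the same route as the paper: both derive $\theta_{u,R} \geq \theta_{w,R}$ from the subset relation $\mathcal{D}(f > S^f_w) \subseteq \mathcal{D}(f > S^f_u)$ via Eq.~\eqref{eq:slb} and then conclude $\theta_u \geq \theta_w$. The only (immaterial) difference is that the paper additionally notes $\theta_{u,L} \leq \theta_{w,L} = 0$ so that $\theta_u = \theta_{u,R}$, whereas you use the slightly more direct bound $\theta_{u,L} \geq 0$.
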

\begin{proof}
W.l.o.g., consider the left case. Since $u < w$, it holds that $\mathcal{D}(f \leq S^f_u) \subset \mathcal{D}(f \leq S^f_w)$ and $\mathcal{D}(f > S^f_u) \supset \mathcal{D}(f > S^f_w)$.
Eq.~\eqref{eq:slb} then implies that $\theta_{u, L} \leq \theta_{w, L} = 0$ and $\theta_{u, R} \geq \theta_{w, R}$. Thus $\theta_u = \theta_{u, R}$, $\theta_w = \theta_{w, R}$, and $\theta_u \geq \theta_w$.
\end{proof}%
Let $M_L$ and $M_R$ represent the right-most (left-most) index with a zero left (right) misclassification score found so far, incremented (decremented) by one. By combining Cor.~\ref{cor:nb} and Theorem~\ref{th:zero}, we define:
\begin{multline}
    \mathcal{P}_{\operatorname{IS}}([i .. j], u, v, \Delta_u, \Delta_v, M_L, M_R) = \\ [ \max(i, M_L, \bar{A}(u, \Delta_u)), \min(j, M_R, \ubar{A}(v, \Delta_v))] \,,
\end{multline}
where $\Delta_u = \theta_u - \mathcal{UB}$ and $\Delta_v = \theta_v - \mathcal{UB}$.

\subsubsection{Sub-interval pruning (SP)}
\label{subsec:sub_interval_pruning}
Sub-interval pruning can prune an entire interval $[i .. j]$ using  the following theorem:
\begin{theorem}
\label{th:sp}
Let $[i .. j]$ be any threshold index interval. Let $\theta_u$ and $\theta_v$ be optimal solutions for previously computed split points $u < i$ and $v > j$, with the corresponding left and right misclassification scores $\theta_{u, L}$, $\theta_{u, R}$, $\theta_{v, L}$, and $\theta_{v, R}$. Then, if $\theta_{u, L} + \theta_{v,R} > \mathcal{UB}$, any split point $w \in [i .. j]$ cannot improve on $\mathcal{UB}$.
\end{theorem}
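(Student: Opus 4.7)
The plan is to invoke the similarity-based lower bound from Eq.~\eqref{eq:slb} separately on the left and right subtrees, using $u$ to bound the left side and $v$ to bound the right side, then combine the two bounds to match the hypothesis $\theta_{u,L}+\theta_{v,R} > \mathcal{UB}$.

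First, I would fix an arbitrary split point $w \in [i..j]$ and exploit the ordering $u < w < v$ together with the fact that the feature values are sorted. This ordering gives the two set containments $\mathcal{D}(f \leq S^f_u) \subseteq \mathcal{D}(f \leq S^f_w)$ for the left children, and $\mathcal{D}(f > S^f_v) \subseteq \mathcal{D}(f > S^f_w)$ for the right children. The key observation is that both containments go in the ``enlarging'' direction, so the set-difference term in Eq.~\eqref{eq:slb} vanishes when we take the smaller dataset as $\mathcal{D}_{old}$ and the larger one as $\mathcal{D}_{new}$.

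Next, I would apply Eq.~\eqref{eq:slb} twice at depth $d-1$: once with $\mathcal{D}_{old}=\mathcal{D}(f \leq S^f_u)$ and $\mathcal{D}_{new}=\mathcal{D}(f \leq S^f_w)$, yielding $\theta_{w,L} \geq \theta_{u,L}$; and once with $\mathcal{D}_{old}=\mathcal{D}(f > S^f_v)$ and $\mathcal{D}_{new}=\mathcal{D}(f > S^f_w)$, yielding $\theta_{w,R} \geq \theta_{v,R}$. Adding these two inequalities gives $\theta_w = \theta_{w,L} + \theta_{w,R} \geq \theta_{u,L} + \theta_{v,R} > \mathcal{UB}$, so $w$ cannot improve on $\mathcal{UB}$. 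Because $w$ was arbitrary in $[i..j]$, the whole interval is pruned.

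The only potential subtlety (and thus the ``hard part,'' though it is mild) is ensuring that Eq.~\eqref{eq:slb} is legitimately applied at the subtree level: the bound was stated for the misclassification score of a tree of a given depth limit on a dataset, and here I am using it for the optimal left/right child scores at depth $d-1$. I would make this explicit by noting that $\theta_{u,L}$, $\theta_{v,R}$, $\theta_{w,L}$, $\theta_{w,R}$ are all optimal depth-$(d-1)$ misclassification scores on their respective datasets, exactly the setting in which Eq.~\eqref{eq:slb} holds. Once that is stated, the rest is a one-line combination and no further computation is required.
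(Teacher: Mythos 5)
Your proof is correct and follows essentially the same route as the paper: both arguments derive $\theta_{w,L} \geq \theta_{u,L}$ and $\theta_{w,R} \geq \theta_{v,R}$ from the ordering $u < w < v$ and then add the two inequalities. The only difference is that you explicitly justify the monotonicity step as the zero-set-difference case of Eq.~\eqref{eq:slb} (as the paper itself does in the proof of Theorem~\ref{th:zero}), whereas the paper's proof of Theorem~\ref{th:sp} leaves that step implicit.
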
%
\begin{proof}
$w \geq i > u$ and $w \leq j < v$ and thus $\theta_{w, L} \geq \theta_{u, L}$ and $\theta_{w, R} \geq \theta_{v, R}$. Therefore, $\theta_w = \theta_{w,L} + \theta_{w,R} > \mathcal{UB}$.
\end{proof}%
\noindent
From Theorem~\ref{th:sp}, we define:
\begin{multline}
\label{eq:sp_pruning}
\mathcal{P}_{SP}([i .. j], \mathcal{UB}, \theta_{u, L}, \theta_{v, R}) = \\
\begin{cases}
\emptyset, & \text{if } \theta_{u, L} + \theta_{v, R} > \mathcal{UB} \land u < i \land v > j \\
[i .. j], & \text{otherwise} \,.
\end{cases}
\end{multline}

\begin{algorithm}[t!]
\caption{$\operatorname{Branch}(\mathcal{D}, d, f, \mathcal{UB})$ uses the pruning techniques to find the optimal threshold for feature~$f$.}
\label{alg:branch_general}
\DontPrintSemicolon
    $\theta_{\operatorname{opt}} \leftarrow \min_{\hat{y} \in \mathcal{Y}} \sum_{(x,y) \in \mathcal{D}} \mathbb{1}(\hat{y} \neq y)$ \;
    $M_L \leftarrow 0, M_R \leftarrow m+1$\;
    $Q \leftarrow \{ [1 .. m] \}, \mathcal{V} \leftarrow \emptyset $\;
    \While{$|Q| > 0$} {
        $[i .. j] \leftarrow Q.\operatorname{pop}()$\;
        $u, v \leftarrow B([i .. j], \mathcal{V})$ \;
        $\Delta_u \leftarrow \theta_u - \mathcal{UB}, \Delta_v \leftarrow \theta_v - \mathcal{UB}$\;
        $[i .. j] \leftarrow \mathcal{P}_{\operatorname{IS}}([i..j], u, v, \Delta_u, \Delta_v, M_L, M_R)$\;
        $[i .. j] \leftarrow \mathcal{P}_{\operatorname{SP}}([i .. j], \mathcal{UB}, \theta_{u, L}, \theta_{v,R})$\;
        \lIf{$|[i..j]| = 0$}{\Continue}
        $w \leftarrow \lfloor \frac{i+j}{2} \rfloor$\;
        \lIf{$d=2$} { $\theta_{w, L}, \theta_{w, R} \leftarrow \operatorname{D2Split}(\mathcal{D}, f, w)$  }
        \uElse{
            $\mathcal{D}_L \leftarrow \mathcal{D}(f \leq S^f_w), \mathcal{D}_R \leftarrow \mathcal{D}(f > S^f_w)$\;
            $\theta_{w,L} \leftarrow \operatorname{CT}(\mathcal{D}_L, d-1, \mathcal{UB})$\;
            $\eta \leftarrow \min(z(w) - z(i), z(j) - z(w))$\;
            $\mathcal{UB}_R \leftarrow \max(\mathcal{UB} - \theta_{w,L}, \eta)$\;
            \lIf{$\mathcal{UB}_R \leq 0$} { $\theta_{w, R} \leftarrow \theta_{\operatorname{opt}} - \theta_{w, L}$}
            \lElse{
            $\theta_{w,R} \leftarrow \operatorname{CT}(\mathcal{D}_R, d-1, \mathcal{UB}_R)$}
        }
        $\theta_w \leftarrow \theta_{w,L} + \theta_{w,R}$\;
        \lIf{$\theta_{w, L} = 0$}{$M_L \leftarrow w+1$}
        \lIf{$\theta_{w, R} = 0 \land \mathcal{UB}_R > 0$}{$M_R \leftarrow w-1$}
        \uIf{$\theta_w < \theta_{\operatorname{opt}}$} {
            $\mathcal{UB} \leftarrow \min(\mathcal{UB}, \theta_w), \theta_{\operatorname{opt}} \leftarrow \theta_w$\;
            \lIf{$\theta_{\operatorname{opt}} = 0$}{\Break}
        }
        $Q.\operatorname{push}\left(\mathcal{P}_{\operatorname{NB}}([i..j], w, \max(1, \theta_w - \mathcal{UB}))\right)$\;  
        $\mathcal{V} \leftarrow \mathcal{V} \cup \{ w \}$ \;
    }
    \Return $\theta_{\operatorname{opt}}$
\end{algorithm}%
\subsection{General Recursive Case}
\label{subsec:general_solver}
To eliminate the exploration of unnecessary splits in the recursion of Eq.~\eqref{eq:dp}, we use the pruning mechanisms presented above and keep track of upper bounds.

\paragraph{Interval pruning} 
In each $\operatorname{Branch}$ call, described in Eq.~\eqref{eq:branch}, for each feature, we keep track of a set of intervals~$Q$ that may still contain the optimal split. We then choose a split point from one of these intervals, compute an optimal solution for this split, and then prune the search space according to the techniques mentioned above. When $Q$ is empty, we have arrived at the optimal solution.

The pseudo-code of this procedure is shown in Alg.~\ref{alg:branch_general}. It loops over the set of intervals $[i .. j] \in Q$. 
Using Eq.~\eqref{eq:outside_indices}, it finds $u, v \in \mathcal{V}$, the closest indices outside of $[i .. j]$ for which the misclassification score is already computed.
First, it attempts to reduce the interval using SP and IS. Then we need to select any point $w \in [i .. j]$. If $w$ is close to previously computed splits, only a little new information is gained. Therefore, we choose the midpoint $w = \lfloor\frac{i+j}{2}\rfloor$. 

Then, if the remaining tree depth budget is two, we use a special subroutine which is explained in the subsection below. Otherwise, the dataset is split and a recursive $\operatorname{CT}$ call is made to get the optimal left subtree. This left solution is used to compute an upper bound for the right subtree. Only if the right upper bound indicates that an improving solution can still be found, is another recursive call made to get the right optimal subtree. If either the left or right subtree has zero misclassifications, the $M_L$ or $M_R$ indices are updated accordingly. 
If a better solution is found, the upper bound is updated. If a tree with zero misclassifications is found, the search is done.
Finally, the remaining interval is divided into two using NB, both of which are added to $Q$, and the search continues with the next interval in $Q$.

\begin{algorithm}[t!]
\caption{$\operatorname{D2Split}(\mathcal{D}, f_1, w)$ finds splits of depth two more efficiently than the normal recursion.}
\label{alg:specialized_solver}
\DontPrintSemicolon
    $\theta_L \leftarrow |\mathcal{D}|, \theta_R \leftarrow |\mathcal{D}|$\;
    $FQ^y_L \leftarrow 0 \quad \forall y \in \mathcal{Y}$\;
    \lFor{$(x, y) \in \mathcal{D}(f \leq S^{f_1}_w)$}{
    $FQ^y_L \leftarrow FQ^y_L + 1$
    }
    $FQ^y_R \leftarrow FQ^y - FQ^y_L \quad \forall y \in \mathcal{Y}$\;
    \For{$f_2 \in \mathcal{F}$} {
    $C^y_{L} \leftarrow 0, C^y_{R} \leftarrow 0 \quad \forall y \in \mathcal{Y}$\;
    \For{$(x,y) \in \mathcal{D} \text{~sorted by~}f_2$} {
        \uIf{$x_{f_1} \leq S^{f_1}_w$} {
            $\theta_{LL} \leftarrow \min_{\hat{y}}(C^{\hat{y}}_{L})$\;
            $\theta_{LR} \leftarrow  \min_{\hat{y}}(FQ^{\hat{y}}_L - C^{\hat{y}}_{L})$\;
            \uIf{$\theta_{LL} + \theta_{LR} \leq \theta_L$} {
                $\theta_L \leftarrow \theta_{LL} + \theta_{LR}$\;
            }
            $C^y_L \leftarrow C^y_L + 1$\;
        }
        \uElse{
            Same logic for instances going right
        }
        \lIf{$\theta_L + \theta_R = 0$}{\Break}
    }
    }
    \Return $\theta_L, \theta_R$

\end{algorithm}%
\paragraph{Upper bounds}
If we set the upper bound for the right subtree tightly to the bound of what right solution can improve the current solution ($\mathcal{UB}_R \leftarrow \mathcal{UB} - \theta_{w, L}$), the right subproblem can be terminated early if no such solution exists. However, when doing so, we do not gain any information for pruning and we observed that this therefore typically decreases performance.
On the other hand, no upper-bound-based pruning at all results in many unnecessary recursive calls.
In this trade-off, we settled on a hybrid approach that in practice works well: the right upper bound $\mathcal{UB}_R$ is set to the maximum of $\mathcal{UB} - \theta_{w, L}$ and the length $\eta$ of the longest side of the interval edges $z(i)$ and $z(j)$ to the midpoint $z(w)$.

\subsection{Depth-Two Subroutine}
\label{subsec:d2_specialized_solver}
To improve runtime, \citet{demirovic2022murtree} introduced a specialized subroutine for trees of depth-two that is more efficient than doing recursive calls. However, it requires a quadratic amount of memory in terms of the number of binary features, which is problematic if we consider a binary feature for every possible threshold on continuous data.

Instead, we provide a specialized subroutine that simultaneously finds an optimal left and right subtree of a depth-two split and does not have this quadratic memory consumption by exploiting the fact that we can sort the observations by their feature values. 
Since splitting the data preserves the order, we sort the dataset once in the beginning.
Then for the sorted data, Alg.~\ref{alg:specialized_solver} shows how an optimal depth-two tree can be found in $\mathcal{O}(|\mathcal{D}||\mathcal{F}|)$ for a given split point~$w$ on feature~$f_1$ for the root node of the subtree.
The core idea is that we first count with the variables $FQ^y_L$ and $FQ^y_R$ how many observations of class $y$ go to the left and right by splitting at point~$w$.
Then, when deciding on the second splitting feature $f_2$ (of either the left or right subtree), we traverse all observations sorted by $f_2$ and incrementally keep track of the current counts per label $C^y_L$ and $C^y_R$.
Based on these two label counts, the label counts for all splitting thresholds on $f_2$ for all four leaf nodes of a depth-two tree can be determined as the dataset is traversed:
\begin{equation}
\begin{aligned}
C^y_{LL} &= C^y_L \,, \quad & \quad
C^y_{LR} &= FQ^y_L - C^y_L \,, \\
C^y_{RL} &= C^y_R \,, \quad & \quad
C^y_{RR} &= FQ^y_R - C^y_R \,.
\end{aligned}%
\end{equation}%
Alg.~\ref{alg:specialized_solver} shows how the minimal misclassifications in the subtrees $\theta_{LL}$ and $\theta_{RR}$ can be computed based on these values.

\subsection{Optimality Gap} We add a $\operatorname{max-gap}$ parameter that determines how far from optimal the final solution is allowed to be. This increases the pruning strength at the expense of optimality. For example, for NB, the distance $\Delta$ to a possibly improving split is now computed as $\Delta = \max(1, \theta_u - (\mathcal{UB} - \operatorname{max-gap}))$. 
To use the gap parameter across multiple depths of the search, we set the $\operatorname{max-gap}$ for the current depth to half of the total. The other half is distributed evenly over the two subproblems.
This allows a trade-off between training time and accuracy.

\subsection{Comparison to Previous Bounding Methods}
\citet{mazumder2022odt_continuous_bnb} propose three lower bounds that require splitting the data into quantiles for a given feature. Their first lower bound is similar to our sub-interval pruning, but our definition is independent of the remaining depth budget. Their other lower bounds are tighter, but also more expensive to compute. Future work could investigate using such or similar lower bounds in ConTree.

The similarity-based lower bound (SLB) was proposed in previous work \citep{hu2019sparse, lin2020generalized_sparse, demirovic2022murtree}, but our application of the bound is more efficient. 
\citet{lin2020generalized_sparse} point out that the \emph{similar support} bound in OSDT \citep{hu2019sparse} is too expensive to compute frequently. Therefore, they propose to use \emph{hash trees} to identify similar subtrees but provide no further details. The implementation of their method, GOSDT, computes the difference between two subproblems by computing the xor of bit-vectors that represent the dataset corresponding to the subproblems. \citet{demirovic2022murtree} loop once over the two sorted lists of identifiers of the datasets to count the differences. Both of these approaches require $\mathcal{O}(n)$ operations to compute the difference. ConTree, on the other hand, exploits the properties of sorted numeric feature data and computes the bound in $\mathcal{O}(1)$ by computing the difference between the two split indices. It applies the bound using the novel pruning techniques in $\mathcal{O}(\log(m))$.

\begin{table*}[t!]
\centering
\begin{tabular}{l ccc cccccc}
\toprule

Dataset &
$|\mathcal{D}|$ & $|\mathcal{F}|$ & $|\mathcal{Y}|$ &
OCT & RS-OCT & SAT-Shati & Quant-BnB & ConTree (No D2) & ConTree \\ \midrule
                         
Avila          & 10430  & 10 & 12 & 
(100\%) & (46\%) & > 4h & 2 & 63 & \textbf{0.1} \\
Bank           & 1097   & 4  & 2  & 
(31\%) & 177 & 62 & 0.5 & 0.1 & \textbf{< 0.1} \\
Bean           & 10888  & 16 & 7  & 
(100\%) & (24\%) & > 4h & 4 & 199 & \textbf{0.2}       \\
Bidding        & 5056   & 9  & 2  & 
(100\%) & (67\%) & 672 & 0.9 & 1 & \textbf{0.1}  \\
Eeg            & 11984  & 14 & 2  & 
OoM & (165\%)  & > 4h & 4 & 178 & \textbf{0.2}          \\
Fault          & 1552   & 27 & 7  & 
(100\%) & (126\%)  & > 4h & 2 & 240 & \textbf{0.1}        \\
Htru           & 14318  & 8  & 2  & 
OoM & (301\%)  & > 4h & 2 & 929 & \textbf{0.2}           \\
Magic          & 15216  & 10 & 2  & 
(100\%) & (88\%)  & > 4h & 2 & 42 & \textbf{0.3}        \\
Occupancy      & 8143   & 5  & 2  & 
(100\%) & (9\%)  & 355 & 0.9 & 2 & \textbf{< 0.1}       \\
Page           & 4378   & 10 & 5  & 
(100\%) & (136\%) & 2836 & 1 & 3 & \textbf{< 0.1}        \\
Raisin         & 720    & 7  & 2  &
(99\%) & (15\%) & 485 & 0.7 & 1 & \textbf{< 0.1}        \\
Rice           & 3048   & 7  & 2  & 
(100\%) & (54\%)  & > 4h & 1 & 24 & \textbf{0.1}         \\
Room           & 8103   & 16 & 4  & 
(100\%) & (88\%) & 4327 & 2 & 10 & \textbf{< 0.1}           \\
Segment        & 1848   & 18 & 7  & 
(100\%) & 2896  & 442 & 2 & 54 & \textbf{0.1}           \\
Skin           & 196045 & 3  & 3  & 
- & (43\%)  & > 4h & 3 & 63 & \textbf{0.1}            \\
Wilt           & 4339   & 5  & 5  & 
(100\%) & (35\%)  & 68 & 0.9 & 2 & \textbf{< 0.1}          \\
\bottomrule
\end{tabular}
\caption{Runtime (s) for optimizing depth-two trees of  OCT, RS-OCT, SAT-Shati, Quant-BnB, and ConTree (with and without the depth-two subroutine). Runtimes are averaged over twenty runs. Values in parentheses show the optimality gap at time out. OoM means out of memory. `-' means the linear relaxation was unsolved at the time limit.
Best results are in bold.}
\label{tab:runtime_comparison}
\end{table*}

\section{Experiments}
\label{sec:numerical_experiments}
Our experiments aim to answer the following questions:
1)~what is the effect of the pruning techniques and the depth-two subroutine;
2)~how does ConTree's runtime compare to state-of-the-art ODT algorithms; and
3)~what is ConTree's anytime performance?
In the appendices~B, C, and~D, we additionally answer the following questions: 
4)~how does ConTree's memory usage compare to previous methods;
5)~how well does ConTree scale to larger depth limits; and
6)~how does ConTree's out-of-sample performance compare to CART and ODTs on binarized data?

The results show that our pruning techniques and depth-two subroutine make ConTree one or more orders of magnitude faster than the state-of-the-art optimal methods while using significantly less memory. Additionally, its out-of-sample performance is better than both CART and ODTs on binarized data. Finally, good solutions are often found early but much time is spent proving optimality.

\paragraph{Setup} 
We have implemented ConTree in C++ and provide it as a Python package.\footnote{\url{https://github.com/ConSol-Lab/contree}.}
All computations were performed on an Intel Xeon E5-6448Y 32C 2.1GHz processor with 25GB RAM running Linux Red Hat Enterprise 8.10. We set a timeout of four hours. 
We evaluate on 16 datasets from the UCI repository \citep[][see Appendix~A]{uci2017}. Unless specified otherwise, we run ConTree with its $\operatorname{max-gap}$ set to zero, thus yielding optimal solutions.

\begin{figure}[t!]
    \centering
    \includegraphics[width=\columnwidth]{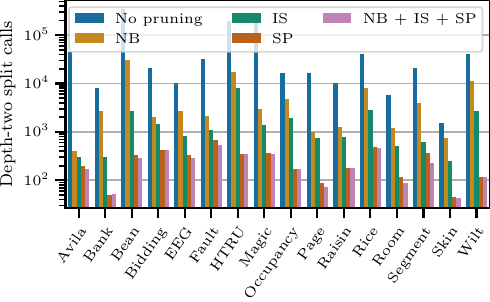}
    \caption{The number of $\operatorname{D2Split}$ calls for no pruning, the three pruning techniques, and all three combined.}
    \label{fig:lower_bound_comparison}
\end{figure}

\paragraph{Pruning techniques}
Fig.~\ref{fig:lower_bound_comparison} shows the impact of the neighborhood pruning (NB), interval shrinking (IS), and sub-interval pruning (SP) techniques in comparison to the baseline with no pruning for trees of depth two.
On average, all pruning techniques significantly prune the search space, with NB, IS, and SP respectively pruning on average 91.1\%, 97.5\%, and 99.6\% of all $\operatorname{D2Split}$ calls, while retaining the optimal solution.
SP generally provides the best results since it efficiently prunes entire intervals.
Combining all three methods (NB+IS+SP) on average yields a 10\% reduction of $\operatorname{D2Split}$ calls in comparison to only using SP.

\paragraph{Depth-two subroutine}
The last two columns of Table~\ref{tab:runtime_comparison} show the performance of ConTree with the depth-two subroutine $\operatorname{D2Split}$ versus without (ConTree with ``No D2'') for computing depth-two trees. Both methods use all the pruning techniques. 
Averaged over twenty runs, the depth-two solver improves the computation time by a factor of 320 compared to the baseline (geometric mean). 

\paragraph{Runtime}
\citet{mazumder2022odt_continuous_bnb} compared Quant-BnB with BinOCT \citep{verwer2019learning}, DL8.5 \citep{aglin2020learning}, and MurTree \citep{demirovic2022murtree}, each of which requires explicit binarization.  Quant-BnB scales one or more orders of magnitude better than all of these methods when those are trained with binary features for every possible threshold.
Additionally, they report that the optimal methods OCT \citep{bertsimas2017optimal}, GOSDT \citep{lin2020generalized_sparse}, FlowOCT, and BendersOCT \citep{aghaei2024strong} cannot solve any of their datasets within a four-hour time limit.

We compare ConTree to the MIP methods OCT and RS\nobreakdash-OCT \citep{hua2022odt_mip_branching}, the SAT method by \citet{shati2023sat_odt_nonbin}, and Quant-BnB \citep{mazumder2022odt_continuous_bnb}.\footnote{\url{https://github.com/LucasBoTang/Optimal_Classification_Trees}, \url{https://github.com/YankaiGroup/optimal_decision_tree}, \url{https://github.com/HarisRasul12/ESC499-Thesis-SAT-Trees}, \url{https://github.com/mengxianglgal/Quant-BnB}.} 
Each of these methods optimizes ODTs directly on the numeric feature data. 
We initialize both OCT and RS\nobreakdash-OCT with a warm start from CART and set the node cost to zero. Both OCT and RS\nobreakdash-OCT could use up to eight threads. OCT is solved with Gurobi 9.5.2.
Each method is run twenty times on each dataset, except when it exceeds the four-hour time-out. For the MIP methods, we report the optimality gap at time-out. 

Table~\ref{tab:runtime_comparison} shows the runtime results for optimizing trees of depth two. Even after four hours, the MIP methods typically have a large optimality gap remaining. OCT ran out of memory twice and once did not solve the linear relaxation before the time out.
The SAT approach performs better but also hits the time-out for seven datasets.
Quant-BnB and ConTree, on the other hand, run in the (sub-)second range. Therefore, we further compare Quant-BnB and ConTree.

Table~\ref{tab:runtime_depth34} shows that for depth three, ConTree outperforms Quant-BnB on average by a factor 63 (geometric mean), ranging from 7 times faster for Bidding up to 266 times faster for Fault. 
The comparisons of Quant-BnB with BinOCT, DL8.5, MurTree, GOSDT, OCT, FlowOCT, and BendersOCT by \citet{mazumder2022odt_continuous_bnb} combined with our new runtime results show that ConTree outperforms state-of-the-art optimal methods by one or more orders of magnitude.
\begin{table}[t!]
\centering
\setlength{\tabcolsep}{1mm}
\begin{tabular}{l cccc}
\toprule

&
\multicolumn{2}{c}{Depth = 3} &
\multicolumn{2}{c}{Depth = 4} \\

\cmidrule(lr){2-3}
\cmidrule(lr){4-5}

Dataset & Quant-BnB & ConTree & ConTree & $\leq 1\%$ Gap \\ \midrule
                         
Avila          &
4451 & 24    & 4195 & 3237  \\
Bank           &
2    & < 0.1 & < 0.1 & < 0.1\\
Bean           &
583  & 61    & > 4h & 3640  \\
Bidding        &
15   & 2     & 5 & < 0.1    \\
Eeg            &
8535 & 136   & > 4h &  > 4h  \\
Fault          &
> 4h & 55    & 12331 & 8592  \\
Htru           &
13147 & 74   & > 4h  & 191 \\
Magic          &
1419 & 60   & > 4h & 5719  \\
Occupancy      &
76  & 0.4   & 17 & 0.1  \\
Page           &
388  & 2     & 499   & 63 \\
Raisin         &
65  & 0.5   & 65    & 27 \\
Rice           &
817 & 12    & 2215  & 231 \\
Room           &
92  & 1     & 44   & 0.1  \\
Segment        &
64  & 2     & 191   & 75 \\
Skin           &
218  & 10    & 211  & 5 \\
Wilt           &
26   & 0.1   & 0.3    & < 0.1             \\
\bottomrule
\end{tabular}
\caption{Runtime (s) comparison between Quant-BnB and ConTree. Averaged over twenty runs.
For depth three, ConTree is on average one or two orders of magnitude faster than Quant-BnB. ConTree's runtime can be significantly reduced by setting a permissible optimality gap.}
\label{tab:runtime_depth34}
\end{table}
\begin{figure}[t!]
    \centering
    \includegraphics[width=\linewidth]{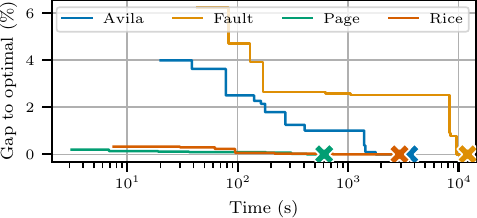}
    \caption{The distance to the optimal solution for ConTree's best depth-four solution over time for three datasets. The optimal solution is typically found significantly earlier than the end of the search (the final cross). }
    \label{fig:anytime}
\end{figure}

Quant-BnB's implementation only permits training trees up to depth three, so for depth four, we report only ConTree's performance. For all but four datasets, ConTree finds the optimal depth-four tree within the time limit.
Appendix~B further shows that for depth five and six, ConTree finds and proves optimal solutions within the time limit for eight and seven datasets respectively.

The last column in Table~\ref{tab:runtime_depth34} shows the runtime ConTree requires to find a depth-four tree that is provably within 1\% of the optimal solution (with $\operatorname{max-gap} = \lfloor 0.01 |\mathcal{D}| \rfloor$). This significantly reduces ConTree's runtime and allows a user to make a trade-off between training accuracy and runtime.

\paragraph{Anytime performance} Fig.~\ref{fig:anytime} shows the best solution found by ConTree at any time during a depth-four search, expressed as the distance to the optimal solution for the Avila, Fault, Page, and Rice datasets.
For the Fault dataset, a final improving solution is found late after the start of the search.
In the other three cases, a good solution is found early and most of the time is spent on proving optimality. 

\nocite{misc_avila_459, misc_banknote_authentication_267, bean_dataset, misc_shill_bidding_dataset_562, misc_eeg_eye_state_264, misc_steel_plates_faults_198, htru_dataset, misc_magic_gamma_telescope_159, occupancy_dataset, misc_page_blocks_classification_78, raisin_dataset, rice_dataset, room_dataset, misc_image_segmentation_50, misc_skin_segmentation_229, misc_wilt_285}

\section{Conclusion }
\label{sec:conclusion_and_discussions}
We introduce ConTree, a dynamic programming and branch-and-bound algorithm that outperforms state-of-the-art algorithms for training optimal classification trees with continuous features by one or more orders of magnitude. 
This result is obtained through three novel pruning techniques that on average prune 99.6\% of recursive calls, and a depth-two subroutine that computes splits 320 times faster than a naive approach.
Moreover, ConTree obtains an out-of-sample accuracy for depth-three trees 5\% higher than CART and 0.7\% higher than optimal decision trees trained with a coarse binarization. 
These results enable the application of optimal decision trees for real-world scenarios.

Future work could add a node cost or node limit to further encourage sparsity and extend ConTree to regression by using bounds from \citet{zhang2022sparse_regtrees} and \citet{bos2024srt}.
Finally, branching on the power set of categorical variables as done by \citet{shati2023sat_odt_nonbin}, could be explored.

\bibliography{references}

\newpage
\appendix

\begin{table*}[t!]
    \centering
    \begin{tabular}{l ccc ccc ccc ccc}
    \toprule
    &
    \multicolumn{3}{c}{Depth = 5, Optimal} &
    \multicolumn{3}{c}{Depth = 5, $\leq 1\%$ Gap} \\ 
    \cmidrule(lr){2-4}
    \cmidrule(lr){5-7}

    Dataset &
    Accuracy (\%) & Runtime (s) & Memory (MB) &
    Accuracy (\%) & Runtime (s) & Memory (MB) \\ \midrule

        Avila &
    \textbf{67.9} & 
    > 4h & 
    44 & 
    \textbf{67.9} & 
    > 4h & 
    41 \\ 

    Bank &
    \textbf{100} & 
    < 0.1 & 
    4 & 
    99.9 & 
    < 0.1 & 
    4 \\ 

    Bean &
    \textbf{92.4} & 
    > 4h & 
    34 & 
    \textbf{92.4} & 
    > 4h & 
    33 \\ 

    Bidding &
    \textbf{100} & 
    10 & 
    11 & 
    99.9 & 
    < 0.1 & 
    9 \\ 

    Eeg &
    \textbf{75.5} & 
    > 4h & 
    30 & 
    \textbf{75.5} & 
    > 4h & 
    29 \\ 

    Fault &
    \textbf{76.4} & 
    > 4h & 
    12 & 
    74.0 & 
    > 4h & 
    12 \\ 

    Htru &
    98.4 & 
    > 4h & 
    37 & 
    \textbf{98.5} & 
    > 4h & 
    61 \\ 

    Magic &
    81.3 & 
    > 4h & 
    34 & 
    \textbf{86.6} & 
    > 4h & 
    40 \\ 

    Occupancy &
    \textbf{99.9} & 
    119 & 
    39 & 
    99.7 & 
    0.3 & 
    10 \\ 

    Page &
    \textbf{98.6} & 
    > 4h & 
    161 & 
    \textbf{98.6} & 
    3647 & 
    55 \\ 

    Raisin &
    \textbf{95.4} & 
    1199 & 
    50 & 
    95.3 & 
    489 & 
    23 \\ 

    Rice &
    \textbf{95.6} & 
    > 4h & 
    74 & 
    95.5 & 
    > 4h & 
    84 \\ 

    Room &
    \textbf{100} & 
    7 & 
    21 & 
    99.9 & 
    0.2 & 
    18 \\ 

    Segment &
    \textbf{99.1} & 
    3715 & 
    51 & 
    99.0 & 
    776 & 
    20 \\ 

    Skin &
    \textbf{99.8} & 
    1705 & 
    782 & 
    99.6 & 
    13 & 
    139 \\ 

    Wilt &
    \textbf{100} & 
    < 0.1 & 
    7 & 
    99.9 & 
    < 0.1 & 
    7 \\ 

    \bottomrule
    \end{tabular}
    \caption{ConTree results for maximum depth five. Runtime is averaged over five runs. The maximum memory usage observed is stated. If ConTree exceeds the time limit we report the best solution found at time-out. Best accuracy results are put in bold.
    }
    \label{tab:contree_d5}
\end{table*}

\section{Data Preprocessing}
\label{app:data}

\begin{table*}[t!]
    \centering
    \begin{tabular}{l ccc ccc}
    \toprule
    &
    \multicolumn{3}{c}{Depth = 6, Optimal} &
    \multicolumn{3}{c}{Depth = 6, $\leq 1\%$ Gap} \\ 
    \cmidrule(lr){2-4}
    \cmidrule(lr){5-7}

    Dataset &
    Accuracy (\%) & Runtime (s) & Memory (MB) &
    Accuracy (\%) & Runtime (s) & Memory (MB)  \\ \midrule
    
    Avila &
    \textbf{70.5} & 
    > 4h & 
    52 & 
    \textbf{70.5} & 
    > 4h & 
    51 \\ 

    Bank &
    \textbf{100} & 
    < 0.1 & 
    5 & 
    \textbf{100} & 
    < 0.1 & 
    5 \\ 

    Bean &
    \textbf{73.7} & 
    > 4h & 
    56 & 
    \textbf{73.7} & 
    > 4h & 
    64 \\ 

    Bidding &
    \textbf{100} & 
    < 0.1 & 
    10 & 
    99.9 & 
    < 0.1 & 
    9 \\ 

    Eeg &
    \textbf{73.5} & 
    > 4h & 
    33 & 
    \textbf{73.5} & 
    > 4h & 
    34 \\ 

    Fault &
    \textbf{78.1} & 
    > 4h & 
    15 & 
    78.0 & 
    > 4h & 
    15 \\ 

    Htru &
    \textbf{98.6} & 
    > 4h & 
    57 & 
    98.5 & 
    > 4h & 
    72 \\ 

    Magic &
    \textbf{74.4} & 
    > 4h & 
    45 & 
    74.3 & 
    > 4h & 
    47 \\ 

    Occupancy &
    \textbf{100} & 
    34 & 
    33 & 
    99.8 & 
    0.5 & 
    11 \\ 

    Page &
    98.0 & 
    > 4h & 
    161 & 
    \textbf{98.3} & 
    > 4h & 
    169 \\ 

    Raisin &
    \textbf{99.6} & 
    180 & 
    60 & 
    99.4 & 
    8 & 
    7\\ 

    Rice &
    \textbf{96.8} & 
    > 4h & 
    320 & 
    96.7 & 
    > 4h & 
    201 \\ 

    Room &
    \textbf{100} & 
    2 & 
    20 & 
    \textbf{100} & 
    0.2 & 
    20 \\ 

    Segment &
    \textbf{100} & 
    483 & 
    14 & 
    99.8 & 
    154 & 
    10 \\ 

    Skin &
    \textbf{99.9} & 
    > 4h & 
    20105 & 
    99.8 & 
    22 & 
    176 \\ 

    Wilt &
    \textbf{100} & 
    < 0.1 & 
    7 & 
    99.9 & 
    < 0.1 & 
    7 \\ 

    \bottomrule
    \end{tabular}
    \caption{ConTree results for maximum depth six. Runtime is averaged over five runs. The maximum memory usage observed is stated. If ConTree exceeds the time limit we report the best solution found at time-out. Best accuracy results are put in bold.}
    \label{tab:contree_d6}
\end{table*}

\begin{table*}[t!]
    \centering
    \begin{tabular}{l ccc cc c}
    \toprule
    &
    \multicolumn{3}{c}{Depth = 2} &
    \multicolumn{2}{c}{Depth = 3} &
    Depth = 4 \\
    \cmidrule(lr){2-4}
    \cmidrule(lr){5-6}
    \cmidrule(lr){7-7}
    
    Dataset &
    SAT-Shati &
    Quant-BnB &
    ConTree &
    Quant-BnB &
    ConTree & 
    ConTree \\ \midrule

    Avila &
    944 & 
    517 & 
    11 & 
    4319 & 
    13 & 
    20 \\ 

    Bank &
    208 & 
    477 & 
    4 & 
    516 & 
    4 & 
    4 \\ 

    Bean &
    934 & 
    532 & 
    15 & 
    1220 & 
    19 & 
    26 \\ 

    Bidding &
    446 & 
    495 & 
    7 & 
    579 & 
    8 & 
    10 \\ 

    Eeg &
    1466 & 
    509 & 
    14 & 
    14850 & 
    18 & 
    25 \\ 

    Fault &
    754 & 
    490 & 
    6 & 
    24750 & 
    7 & 
    10 \\ 

    Htru &
    1256 & 
    518 & 
    12 & 
    14781 & 
    15 & 
    24 \\ 

    Magic &
    1530 & 
    533 & 
    15 & 
    1458 & 
    18 & 
    26 \\ 

    Occupancy &
    450 & 
    494 & 
    7 & 
    608 & 
    8 & 
    11 \\ 

    Page &
    490 & 
    493 & 
    7 & 
    1184 & 
    8 & 
    11 \\ 

    Raisin &
    283 & 
    489 & 
    4 & 
    550 & 
    4 & 
    5 \\ 

    Rice &
    816 & 
    494 & 
    5 & 
    1198 & 
    6 & 
    9 \\ 

    Room &
    693 & 
    504 & 
    12 & 
    646 & 
    14 & 
    18 \\ 

    Segment &
    306 & 
    503 & 
    6 & 
    595 & 
    6 & 
    8 \\ 

    Skin &
    3965 & 
    571 & 
    78 & 
    736 & 
    89 & 
    123 \\ 

    Wilt &
    254 & 
    494 & 
    6 & 
    564 & 
    6 & 
    7 \\ 

    \bottomrule
    \end{tabular}
    \caption{Maximum memory usage (MB) per method. ConTree uses significantly less memory than previous approaches.}
    \label{tab:memory}
\end{table*}

All datasets used in the experiments are from the UCI machine learning repository \cite{uci2017}. We apply the same data preprocessing techniques as \citet{mazumder2022odt_continuous_bnb}: all features that do not assist prediction are removed (e.g., timestamps or unique identifiers).
For the experiments that measure the runtime and memory usage, we use the train split from their repository.\footnote{\url{https://github.com/mengxianglgal/Quant-BnB}} For the out-of-sample experiments in Appendix~\ref{app:oos}, we use the full dataset.
The preprocessing per dataset is as follows:

\begin{description}[font=\normalfont]
    \item[\textbf{Avila} \citep{misc_avila_459}] All features are kept as is.
    \item[\textbf{Bank} \citep{misc_banknote_authentication_267}] We keep all the features from the \emph{Banknote Authentication} dataset.
    \item[\textbf{Bean} \citep{bean_dataset}] We keep all the features from the \emph{Dry Bean} dataset.
    \item[\textbf{Bidding} \citep{misc_shill_bidding_dataset_562}] We remove the features \emph{Record\_ID}, \emph{Auction\_ID}, and \emph{Bidder\_ID} from the \emph{Shill Bidding} dataset.
    \item[\textbf{Eeg} \citep{misc_eeg_eye_state_264}] We keep all the features from the \emph{EEG Eye State} dataset.
    \item[\textbf{Fault} \citep{misc_steel_plates_faults_198}] We keep all the features from the \emph{Steel Plates Faults} dataset.
    \item[\textbf{Htru} \citep{htru_dataset}] We keep all the features from the \emph{HTRU2} dataset.
    \item[\textbf{Magic} \citep{misc_magic_gamma_telescope_159}] We keep all the features from the \emph{MAGIC Gamma Telescope} dataset.
    \item[\textbf{Occupancy} \citep{occupancy_dataset}] We remove the \emph{id} and \emph{date} features from the \emph{Occupancy Detection} dataset.
    \item[\textbf{Page} \citep{misc_page_blocks_classification_78}] We keep all the features from the \emph{Page Blocks Classification} dataset.
    \item[\textbf{Raisin} \citep{raisin_dataset}] All features are kept as is.
    \item[\textbf{Rice} \citep{rice_dataset}] We keep all the features from the \emph{Rice (Cammeo and Osmancik)} dataset.
    \item[\textbf{Room} \citep{room_dataset}] We remove the \emph{Date} and \emph{Time} features from the \emph{Room Occupancy Estimation} dataset.
    \item[\textbf{Segment} \citep{misc_image_segmentation_50}] We remove the \emph{REGION-PIXEL-COUNT} feature from the \emph{Image Segmentation} dataset because it has only one unique value.
    \item[\textbf{Skin} \citep{misc_skin_segmentation_229}] We keep all the features from the \emph{Skin Segmentation} dataset.
    \item[\textbf{Wilt} \citep{misc_wilt_285}] All features are kept as is.
\end{description}
\begin{table*}[t!]
\centering
\begin{tabular}{l ccc c cccccc }
\toprule

& & & & & \multicolumn{6}{c}{Optimal decision tree (given the binarization)} \\
\cmidrule(lr){6-11}

& & & &
CART & \multicolumn{2}{c}{10 quantiles} & \multicolumn{2}{c}{Guessing thresholds} &  \multicolumn{2}{c}{ConTree (All thresholds)} \\ 

\cmidrule(lr){5-5}
\cmidrule(lr){6-7}
\cmidrule(lr){8-9}
\cmidrule(lr){10-11}

Dataset &
$|\mathcal{D}|$ & $|\mathcal{F}|$ & $|\mathcal{Y}|$ &
Accuracy &
$|\mathcal{F}_{b}|$ & Accuracy & $|\mathcal{F}_{b}|$ & Accuracy & $|\mathcal{F}_{b}|$ & Accuracy \\ \midrule
                         
Avila          & 20867  & 10 & 12 & 
52.7 \small{$\pm$ 0.1} & 
95 & 
56.6 \small{$\pm$ 0.1} & 
108 & 
56.8 \small{$\pm$ 0.2} & 
41110 & 
\textbf{58.3 \small{$\pm$ 0.1}}  \\ 

Bank           & 1372   & 4  & 2  & 
92.7 \small{$\pm$ 0.9} & 
40 & 
96.5 \small{$\pm$ 0.3} & 
29 & 
96.0 \small{$\pm$ 0.3} & 
5016 & 
\textbf{97.0 \small{$\pm$ 0.3}}   \\ 

Bean           & 13611  & 16 & 7  & 
77.6 \small{$\pm$ 0.1} & 
160 & 
84.6 \small{$\pm$ 0.2}  & 
194 & 
79.9 \small{$\pm$ 0.5} & 
211343 & 
\textbf{86.8 \small{$\pm$ 0.1}}  \\ 

Bidding        & 6321   & 9  & 2  & 
98.4 \small{$\pm$ 0.1} & 
61 & 
\textbf{99.3 \small{$\pm$ 0.1}} & 
23 & 
\textbf{99.3 \small{$\pm$ 0.1}} & 
12527 & 
\textbf{99.3 \small{$\pm$ 0.1}} \\ 

Eeg            & 14980  & 14 & 2  & 
66.3 \small{$\pm$ 0.2} & 
140 & 
69.7 \small{$\pm$ 0.2} & 
214 & 
\textbf{70.7 \small{$\pm$ 0.2}} & 
5404 & 
70.6 \small{$\pm$ 0.3} \\ 

Fault          & 1941   & 27 & 7  & 
52.7 \small{$\pm$ 0.5} & 
236 & 
65.1 \small{$\pm$ 0.6} & 
252 & 
65.4 \small{$\pm$ 1.1} & 
19226 & 
\textbf{65.6 \small{$\pm$ 0.9}} \\ 

Htru           & 17898  & 8  & 2  & 
97.7 \small{$\pm$ 0.1} & 
80 & 
97.7 \small{$\pm$ 0.1} & 
99 & 
\textbf{97.8 \small{$\pm$ 0.1}} & 
123368& 
\textbf{97.8 \small{$\pm$ 0.1}} \\ 

Magic          & 19020  & 10 & 2  & 
79.0 \small{$\pm$ 0.3} & 
100 & 
82.1 \small{$\pm$ 0.2} & 
198 & 
\textbf{82.5 \small{$\pm$ 0.3}} & 
146815 & 
82.4 \small{$\pm$ 0.2} \\ 

Occupancy      & 20560   & 5  & 2  & 
98.9 \small{$\pm$ 0.1} & 
45 & 
98.3 \small{$\pm$ 0.2} & 
31 & 
98.9 \small{$\pm$ 0.1} & 
19709 & 
\textbf{99.0 \small{$\pm$ 0.1}} \\ 

Page           & 5473   & 10 & 5  & 
96.0 \small{$\pm$ 0.2} & 
98 & 
94.8 \small{$\pm$ 0.3} & 
87 & 
96.0 \small{$\pm$ 0.1} & 
9082 & 
\textbf{96.3 \small{$\pm$ 0.3}} \\ 

Raisin         & 900    & 7  & 2  &
84.8 \small{$\pm$ 0.7} & 
70 & 
\textbf{85.7 \small{$\pm$ 1.2}} & 
62 & 
84.6 \small{$\pm$ 0.6} & 
6289 & 
85.2 \small{$\pm$ 0.7} \\ 

Rice           & 3810   & 7  & 2  & 
93.2 \small{$\pm$ 0.5} & 
70 & 
92.9 \small{$\pm$ 0.3} & 
81 & 
\textbf{93.5 \small{$\pm$ 0.5}} & 
24635 & 
93.2 \small{$\pm$ 0.5} \\ 

Room           & 10129   & 16 & 4  & 
96.9 \small{$\pm$ 0.1} & 
96 & 
98.1 \small{$\pm$ 0.2} & 
67 & 
98.8 \small{$\pm$ 0.1} & 
3072 & 
\textbf{99.0 \small{$\pm$ 0.1}} \\ 

Segment        & 2310   & 18 & 7  & 
56.8 \small{$\pm$ 0.1} & 
165 & 
85.2 \small{$\pm$ 0.6} & 
34 & 
81.8 \small{$\pm$ 2.3} & 
12057 & 
\textbf{87.1 \small{$\pm$ 0.8}} \\ 

Skin           & 245057 & 3  & 3  & 
96.5 \small{$\pm$ 0.0} & 
30 & 
96.1 \small{$\pm$ 0.0} & 
36 & 
96.7 \small{$\pm$ 0.0} & 
765 & 
\textbf{96.8 \small{$\pm$ 0.0}} \\ 

Wilt           & 4839   & 5  & 5  & 
97.5 \small{$\pm$ 0.2} & 
50 & 
97.7 \small{$\pm$ 0.2} & 
61 & 
97.8 \small{$\pm$ 0.2} & 
22575 & 
\textbf{97.9 \small{$\pm$ 0.1}} \\ 
\midrule

Wins & & & &
0 & 
& 2 & 
& 5 & 
& 12 \\ 

\bottomrule
\end{tabular}
\caption{Out-of-sample accuracy and standard error~(\%) for trees of maximum depth three for five runs. ConTree (being able to split on all thresholds) significantly outperforms CART on the numeric data and optimal decision trees on data binarized using ten thresholds per feature or using a threshold guessing method \citep{mctavish2022sparse_trees_guess_ensembles}. The number of binary features considered are mentioned in the columns marked with $|\mathcal{F}_b|$. For ConTree, the column $|\mathcal{F}_b|$ mentions the number of binary features that would be required for an explicit binarization.}
\label{tab:oos}
\end{table*}

To determine all unique values per feature in a dataset, we sort the instances and check if consecutive values differ more than a small value $\varepsilon$. In this paper, we set $\varepsilon=1 \times 10^{-7}$.

\section{Results for Larger Depth Limits}
\label{app:large_depth}
Table~\ref{tab:contree_d5} and~\ref{tab:contree_d6} show the results for training optimal decision trees of depths five and six. The results are averaged over five runs. ConTree finds and proves optimal trees for eight and seven datasets respectively.

These tables also show that when an optimality gap of 1\% is allowed, the runtime is sometimes significantly reduced. There are also cases where the final training accuracy is higher when running with this optimality gap. This is because the permitted gap allows more of the search space to be pruned and less time is spent in checking every possible split. This motivates future work in improving the anytime performance of ConTree and in establishing an improved search order.

\section{Memory Usage Results}
\label{app:memory}

Table~\ref{tab:memory} reports the memory usage of the SAT-Shati, Quant-BnB, and ConTree for computing trees of depth two (all methods), three (Quant-BnB and ConTree), and four (only ConTree).
For depth two, memory poses no problem for all three methods.
For depth three, Quant-BnB uses 15GB of memory for Eeg and Htru and even 25GB for Fault. In contrast, ConTree's memory usage for these datasets is 18MB, 15MB, and 7MB respectively. At depth four, ConTree's maximum memory usage is for the Skin dataset, where it uses 123MB of memory. This highlights ConTree's memory efficiency in comparison to previous methods.

Table~\ref{tab:contree_d5} and Table~\ref{tab:contree_d6} further show the maximum memory usage when computing optimal trees of depths five and six. For all datasets, except Skin, the memory consumption stays within 320MB. The Skin dataset is the only dataset where ConTree uses a lot of memory when optimizing deeper trees. The reason for this is the size of the dataset which is ten times larger than the next largest dataset in our benchmark. If memory becomes an issue, ConTree's caching mechanism could be changed from dataset caching to branch caching to lower memory consumption \citep{demirovic2022murtree}.

\section{Out-of-Sample Results}
\label{app:oos}
To test the out-of-sample performance of ConTree, we compare it to the CART heuristic, and to optimal decision trees trained on binarized data.
We test two binarization approaches: one based on \emph{quantiles} and the other based on \emph{threshold guessing} using a reference ensemble \citep{mctavish2022sparse_trees_guess_ensembles}.
For the quantile approach, we binarize each of the numeric features using thresholds on ten quantiles of the feature distribution. For the threshold guessing approach, we follow \citet{mctavish2022sparse_trees_guess_ensembles} and train a gradient boosting classifier with the number of estimators set to ten times the number of continuous features and the maximum depth to two. After training, the least important features are removed iteratively until the performance of the ensemble drops.\footnote{\url{https://github.com/ubc-systopia/gosdt-guesses}.}

We repeat the experiment five times on five stratified 80\%-20\% train-test splits. For this, we use the full datasets rather than only the training splits used in previous experiments. In Table~\ref{tab:oos} we list the size of these datasets, the number of binary features $|\mathcal{F}_b|$ obtained by the two binarization methods, and the number of binary features one would need to search over all possible thresholds on the numeric features (what ConTree does, but without an explicit binarization).

Since all optimal methods obtain trees with the same accuracy, we only need to compare to one method on the binarized data. Because of its scalability, we train optimal decision trees for the binarized data using STreeD \citep{linden2023streed}.\footnote{\url{https://github.com/AlgTUDelft/pystreed}.}
For CART we use the $\operatorname{sklearn}$ implementation.
Each method is trained with a maximum depth of three. Within that depth limit, each method tunes a tree-size hyperparameter using five 80\%-20\% train-validation splits. CART uses cost-complexity tuning, STreeD tunes the depth and the number of nodes, and ConTree only tunes the depth.

Table~\ref{tab:oos} shows how ConTree significantly outperforms both CART on the continuous data and ODTs on binarized data. ConTree's test accuracy on average is 4.7\% higher than CART. For the Segment dataset, this difference is even 30\%. ConTree's test accuracy on average is 0.7\% higher than the trees optimized with the quantile binarization and 1.0\% higher than trees optimized with the threshold guessing binarization. A Wilcoxon signed rank test indicates that all these results are statistically significant~($p < 0.01$).

\end{document}